\documentclass[12pt,a4]{elsarticle} 
\usepackage{amsthm}
\usepackage{bbold}
\usepackage{amsthm}
\usepackage{color, soul}
\usepackage[ruled,vlined]{algorithm2e}
\usepackage{array,booktabs}
\usepackage{url}
\usepackage{listings}

\newtheorem{theorem}{Theorem}[section]



\usepackage{graphicx}
\usepackage{amssymb}
\usepackage{amsthm}

\usepackage{lineno}
\usepackage{mathtools}

\theoremstyle{definition}
\newtheorem{definition}{Definition}[section]





\journal{Knowledge Based Systems}

\begin{document}

\begin{frontmatter}


\title{A Model-Agnostic Algorithm for Bayes Error Determination in Binary Classification}




\author[toelt]{Umberto Michelucci}
\author[polito]{Michela Sperti}
\author[idsia]{Dario Piga}
\author[zhaw,toelt]{Francesca Venturini}
\author[polito]{Marco A. Deriu}

\address[toelt]{TOELT llc, Birchlenstr. 25, 8600 D\"ubendorf, Switzerland}
\address[polito]{PolitoBIOMed Lab, Department of Mechanical and Aerospace Engineering, Politecnico di Torino,
Turin, Italy}
\address[zhaw]{Institute of Applied Mathematics and Physics, Zurich University of Applied Sciences, Technikumstrasse 9, 8401 Winterthur, Switzerland}
\address[idsia]{IDSIA - Dalle Molle Institute for Artificial Intelligence, USI-SUPSI, Via la Santa 1, 6962 Lugano, Switzerland}

\begin{abstract}
This paper presents the intrinsic limit determination algorithm (ILD Algorithm), a novel technique to determine the best possible performance, measured in terms of the AUC (area under the ROC curve) and accuracy, that can be obtained from a specific dataset in a binary classification problem with categorical features {\sl regardless} of the model used. This limit, namely the Bayes error, is completely independent of any model used and describes an intrinsic property of the dataset.
The ILD algorithm thus provides important information regarding the  prediction limits of any binary classification algorithm when applied to the considered dataset.    
In this paper the algorithm is described in detail, its entire mathematical framework is presented and the pseudocode is given to facilitate its implementation. Finally, an example with a real dataset is given.

\end{abstract}

\begin{keyword}
Machine Learning \sep Intrinsic limits \sep ROC Curve \sep Binary Classification \sep Area under the curve \sep Na\"ive Bayes Classifier


\end{keyword}

\end{frontmatter}









\section{Introduction}

The majority of  machine learning projects  tend to follow the same pattern. Namely, many different machine learning model types (as decision trees, logistic regression, random forest, neural network, etc.) are first trained from data to predict specific outcomes, and then tested and compared to find the one that gives the best prediction performance on validation data.  Many techniques to compare models have been developed and are commonly used in several settings \cite{raschka2018model, arlot2010survey}. For some specific model types, as neural networks, it is difficult to know when to stop the search \cite{Michelucci2017}. There is always the hope that a different set of hyperparameters, as the number of layers, or a better optimizer will give a better performance \cite{Michelucci2017, yu2020hyper}. This makes the model comparison laborious and time-consuming. 

Many reasons may lead to a bad accuracy: overlapping class densities \cite{garcia2008k, yuan2021novel}, noise affecting the data \cite{schlimmer1986incremental, angluin1988learning, raychev2016learning}, deficiencies in the classifier or limitations in the training data being the most important \cite{tumer2003bayes}. Classifier deficiencies could be addressed by building better models, of course, but other types of errors linked with the data (for example mislabeled patterns or missing relevant features) will lead to an error that cannot be reduced by any optimisation in the model training, regardless of the effort invested. This error is also known in the literature as Bayes error (BE) \cite{Michelucci2017,gareth2013introduction}. The BE can, in theory, be obtained from the Bayes theorem if one would know all density probabilities exactly. However, this is impossible in all real-life scenarios, and thus the BE cannot be computed directly from the data in all non-trivial cases. The Na\"ive Bayes classifier \cite{gareth2013introduction} tries to approximately address this problem, but it is based on the assumption of the conditional independence of the features, rarely satisfied in practice \cite{tumer1998mutual, tumer2003bayes, gareth2013introduction}. The methods to estimate the BE developed in the past decade tend to follow the same strategy: reduce the error linked to the classifier as much as possible, thus being left with only the BE. Ensemble strategies and meta learners \cite{tumer1998mutual, tumer2003bayes, ghosh2002multiclassifier} have been widely used to address this problem. The idea is to exploit the multiple predictions to provide an indication of the limits to the performance for a given dataset \cite{tumer2003bayes}. This approach has been widely used with neural networks, given their universal approximator nature \cite{richard1991neural, shoemakerleast}.

In any supervised learning task, knowing the BE linked to a given dataset would be of extreme importance. Such a value would help practitioners decide whether or not it is worthwhile to spend time and computing resources in improving the developed classifiers or acquiring additional  training data. Even more importantly, knowing the BE would let practitioners assess if the available features in a dataset are useful for a specific goal. Suppose for example that a set of clinical exams are available for a large number of patients. If such a feature set gives a BE of 30\% (so an accuracy of 70\%) in predicting an outcome and a BE smaller than 20\% is the desired target, it is useless to spend time in developing models. So time would be better spent in acquiring additional features.
The problem of determining the BE intrinsic of a given dataset is addressed and solved in this work from a theoretical point of view.

The   contribution  of this paper is twofold.
Firstly, a new algorithm, called {\bf I}ntrinsic {\bf L}imit {\bf D}etermination algorithm (ILD algorithm) is presented. The ILD algorithm allows computing the maximum performance in a binary classification problem, expressed both as the largest area under the ROC curve (AUC) and as the accuracy that can be achieved with any given dataset with categorical features. 
This is by far the largest contribution of this paper, also with respect to previous methods, since the ILD algorithm for the first time allows evaluating the BE for a given dataset exactly. This paper demonstrates how the BE is a limit not dependent on any chosen model but is an inherent property of the dataset itself. Thus, the ILD algorithm gives the upper limit of the prediction possibilities of any possible model when applied to a given dataset, with the only restrictions that the features must be categorical and that the target variable must be binary.
Secondly, the mathematical framework on which the ILD algorithm is based is discussed and a mathematical proof of the algorithm validity is given. The algorithm's computational complexity is also discussed.

The paper is organized as follows. The necessary notation and dataset restructuring for the ILD algorithm is discussed in Section \ref{sec:ILDA}. In Section \ref{sec:ILDA2} the complete mathematical formalism necessary for the ILD algorithm is discussed in detail and the fundamental ILD  Theorem is given and proof is discussed. Application of the ILD algorithm to a real dataset is provided in Section \ref{sec:appl}. Finally, in Section \ref{sec:concl} conclusions and promising research developments are discussed.

\section{Mathematical Notation and Dataset Aggregation}
\label{sec:ILDA}



Let us consider a dataset with $N$ categorical features, i.e., each feature can only assume a finite set of values. Let us suppose that the $i^{th}$ feature, denoted as $F_i$, takes $n_i$ possible  values. For notational  simplicity, it is  assumed that the categorical feature is  encoded in such a way that its possible values are  integers from $1$ to $n_i$, with $i=1,...,N$ (note that each $n_i$ can assume different integer values). Each possible combination of the features is called here a {\sl bucket}. The idea is that the observations will be aggregated in buckets depending on their features.
The number of observations present in the dataset are indicated with $M$.  All the observations with the same set of features are said to be in the same bucket. 

The problem is a binary classification one, aiming at predicting an event that can have only two possible outcomes, indicated here with class 0 and class 1. In general, in the $j^{th}$ bucket, there will be a certain number of observations (that we will indicate with $m_0^{[j]}$) with a class of 0, and a certain number of observations (that we will indicate with $m_1^{[j]}$) with a class of 1.

The feature vector for each observation, denoted as $x_k$ (with $k=1,...,M$), is thus defined by an $N$-dimensional vector $(F_{1,k}, F_{2,k}, ..., F_{N,k})  \in \mathbb{N}^N$, where $F_{i,k}$ denotes the value of the $i$-th feature of the $k$-th sample. 

The following useful definitions are now introduced.
\begin{definition}
A {\sl feature bucket} $B^{[j]}$ is a possible combination of the values of the $N$ features, i.e., 
\begin{equation}
    B^{[j]} = \{ (F_1^{[j]}, F_2^{[j]}, ..., F_N^{[j]}) \in \mathbb{N}^N | 
    \, F_i^{[j]} \in \{0,1,...,n_i-1 \}, i = 1,...,N \}
\end{equation}
\end{definition}
In the rest of the paper, the feature bucket is indicated as 
$$B^{[j]} = (F_1^{[j]}, F_2^{[j]}, ..., F_N^{[j]})$$ 
to explicitly mention the feature values characterizing the bucket $B^{[j]}$. 
The total number $N_B$ of feature buckets is thus equal to:   
$N_B = \prod_{i=1}^N n_i$.

As an example, in the case of  two binary features $F_1$ and $F_2$,  four possible feature buckets can be constructed, namely: $B^{[1]}=(0,0)$, $B^{[2]}=(0,1)$, $B^{[3]}=(1,0)$ and $B^{[4]}=(1,1)$. 

\begin{definition}
The set $M^{[j]}$ of the indices of observations belonging to the $j$-th feature bucket $B^{[j]}$ is defined as 
\begin{equation}
    M^{[j]} = \{
    k \in [1,\ M] \, |  \, F_{0,k} = F_0^{[j]} \, \textrm{and} \, F_{1,k} = F_1^{[j]} \, \textrm{and} ... \, F_{N,k} = F_N^{[j]}
    \}
\end{equation}
\end{definition} 
The cardinality of the set $M^{[j]}$ will be denoted as $|M^{[j]}|$.  
In a binary classification problem, the observations belonging to the feature bucket $B^{[j]}$, denoted as \begin{equation}
    O^{[j]} = \{
    x_k \, | \, k \in M^{[j]}
    \}
\end{equation}
will contain $m_0^{[j]} \in \mathbb{N}_0$ observations with a target variable equal to $0$ and $m_1^{j]}  \in \mathbb{N}_0$ observations with a target variable equal to $1$. Note that by definition
\begin{equation}
    |M^{[j]}| = m_0^{[j]} + m_1^{[j]}
\end{equation}
and 
\begin{equation}
    M \equiv \sum_{j=1}^{N_B} |M^{[j]}| = \sum_{j=1}^{N_B}  (m_0^{[j]} + m_1^{[j]})
\end{equation}

Based on the definitions above, the original dataset can be equivalently represented by a new dataset of buckets, an {\sl aggregated dataset} $B$,
 each of which contains a certain number of samples $|M^{[j]}|=m_0^{[j]}+m_1^{[j]}$. A  visual representation of the previously described re-arrangement of the original dataset is reported in Figure~\ref{fig:dataset_aggregation} for a dataset with two binary features.
 \begin{figure}
\begin{center}
\includegraphics[width=13cm]{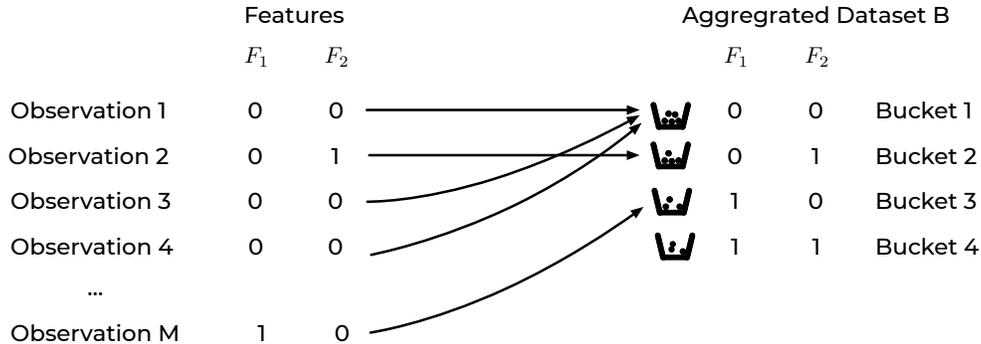}
\caption{An intuitive representation of the dataset aggregation step for a dataset with two binary features $F_1$ and $F_2$. Observations with, for example, $F_1=0$ and $F_2=0$ will be in bucket 1 in the aggregated dataset $B$. Features with $F_1=0$ and $F_2=1$ in bucket 2 and so on.}
\label{fig:dataset_aggregation}
\end{center}
\end{figure}

In the aggregated dataset $B$ each record is thus a feature bucket $B^{[j]}$, characterized by  the number  of observations  of class 0 ($m_0^{[j]}$) and the number of observations of class 1 ($m_1^{[j]}$). In the previous example of a dataset with only two binary features, $B$ would look like the one in Table \ref{tab:aggr_ds}. In this easy example a dataset with any number of observations $M$ would be reduced to one with only 4 records, i.e., the number of possible buckets.
\begin{table}[h!]
\centering
 \begin{tabular}{c ||c |c ||c|c} 
 \hline
 Bucket &  Feature 1 & Feature 2 & Class 0 & Class 1\\ 
 \hline
 1 & $F_1^{[1]}=0$  & $F_2^{[1]}=0$ & $m_0^{[1]}$ & $m_1^{[1]}$ \\
 \hline
 2 & $F_1^{[2]}=0$  & $F_2^{[2]}=1$ & $m_0^{[2]}$ & $m_1^{[2]}$ \\
 \hline
 3 & $F_1^{[3]}=1$  & $F_2^{[3]}=0$ & $m_0^{[3]}$ & $m_1^{[3]}$ \\
 \hline
 4 & $F_1^{[4]}=1$  & $F_2^{[4]}=1$ &$m_0^{[4]}$ & $m_1^{[4]}$ \\ [1ex] 
 \hline
\end{tabular}
\caption{An example of an aggregated dataset $B$ with only two binary features.}
\label{tab:aggr_ds}
\end{table}
 
With this new representation of the dataset, generated by aggregating all observations sharing the same feature values in buckets, the proposed ILD  algorithm allows computing the best possible ROC curve considering {\sl all possible} predictions.

\section{ILD Algorithm Mathematical Framework}
\label{sec:ILDA2}

Since the output of any machine learning model is a function of the feature values, and since a bucket is a collection of observations all with the same feature values, any possible deterministic model will associate to the $j^{th}$ bucket of features only one possible class prediction $p^{[j]}$ that can be either 0 or 1. More in general, to each model can be associated a prediction vector ${\bf p}=(p^{[1]},p^{[2]},...,p^{[N_B]})$. In the next sections important quantities (as TPR and FPR) evaluated for the aggregated dataset $B$ as functions of $m_0^{[i]}$, $m_1^{[i]}$ and ${\bf p}$ are derived.

\subsection{True Positive, True Negative, False Positive, False Negative}

In the feature bucket $i$, if $p^{[i]}=0$ then only $m_0^{[i]}$ observations would be correctly classified. On the other side, if $p^{[i]}=1$ only $m_1^{[i]}$ observations would be correctly classified. For each bucket $i$ the true positive  $TP^{[i]}$ can be written as
\begin{equation}
TP^{[i]} = m_1^{[i]}p^{[i]}
\end{equation}
In fact, if $p^{[i]}=0$, then $TP^{[i]}=0$, and if $p^{[i]}=1$ then $TP^{[i]}=m_1^{[i]}$. Considering the entire dataset, the true positive, true negative ($TN$), false positive ($FP$), false negative ($FN$) are given by:
\begin{eqnarray}
TP &=& \sum_{i=1}^{N_B} TP^{[i]} = \sum_{i=1}^{N_B} m_1^{[i]} p^{[i]}\label{eq:tp}\\
TN &=& \sum_{i=1}^{N_B} TN^{[i]} = \sum_{i=1}^{N_B} m_0^{[i]} (1-p^{[i]})\label{eq:tn}\\
FP &=& \sum_{i=1}^{N_B} FP^{[i]} = \sum_{i=1}^{N_B} m_0^{[i]} p^{[i]}\label{eq:fp}\\
FN &=& \sum_{i=1}^{N_B} FN^{[i]} = \sum_{i=1}^{N_B} m_1^{[i]}(1-p^{[i]})\label{eq:fn}
\end{eqnarray}
where the sums are performed over all the $N_B$ buckets.

\subsection{Accuracy}
In a binary classification problem, the accuracy is defined as
\begin{equation}
a = \frac{TP+TN}{M}.
\end{equation}
Using equations (\ref{eq:tp}) and (\ref{eq:tn}) 
the accuracy can be rewritten  as
\begin{equation}
a = \frac{1}{M} \sum_{i=1}^{N_B} \left[
p^{[i]}(m_1^{[i]}-m_0^{[i]})+m_0^{[i]}
\right]
\label{eq:accuracy}
\end{equation}

The maximum value of the accuracy is obtained if the model predicts $p^{[i]}=1$ as soon as $m_1^{[i]}>m_0^{[i]}$. This can be stated as
\begin{theorem}
   The accuracy for an aggregated categorical dataset $B$, expressed as Equation (\ref{eq:accuracy}), is maximised by choosing $p^{[i]}=1$ when $m_1^{[i]}>m_0^{[i]}$ and $p^{[i]}=0$ when $m_1^{[i]}\leq m_0^{[i]}$.
 \label{th:max_accuracy}
\end{theorem}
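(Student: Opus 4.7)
The plan is to exploit the separability of the accuracy formula in Equation (\ref{eq:accuracy}) across buckets. Since the prediction vector $\mathbf{p}$ lives in the discrete set $\{0,1\}^{N_B}$ and the expression for $a$ decomposes into a sum whose $i$-th term depends only on $p^{[i]}$, the global maximization collapses to $N_B$ independent scalar problems.

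First I would discard all parts of the accuracy that do not depend on $\mathbf{p}$. Since $M$ is a positive constant and the terms $m_0^{[i]}$ appear additively without any $p^{[i]}$ factor, maximizing $a$ over $\mathbf{p}$ is equivalent to maximizing
$$S(\mathbf{p}) \;=\; \sum_{i=1}^{N_B} p^{[i]}\bigl(m_1^{[i]}-m_0^{[i]}\bigr).$$
Because the summands involve disjoint optimization variables, any maximizer of $S$ must simultaneously maximize each individual term $p^{[i]}(m_1^{[i]}-m_0^{[i]})$.

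Next I would analyze a single bucket term. Restricted to $p^{[i]}\in\{0,1\}$, it takes only the two values $0$ (when $p^{[i]}=0$) and $m_1^{[i]}-m_0^{[i]}$ (when $p^{[i]}=1$). The larger of the two is $\max\bigl(0,\,m_1^{[i]}-m_0^{[i]}\bigr)$, attained by choosing $p^{[i]}=1$ precisely when $m_1^{[i]}-m_0^{[i]}>0$, i.e.\ $m_1^{[i]}>m_0^{[i]}$, and by choosing $p^{[i]}=0$ whenever $m_1^{[i]}\leq m_0^{[i]}$. This is exactly the rule stated in the theorem, and assembling the bucket-wise optima yields the global maximum of $S$, hence of $a$.

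There is no deep obstacle in this argument; the only subtle point is the equality case $m_1^{[i]}=m_0^{[i]}$, in which both choices of $p^{[i]}$ produce the same contribution. I would make this explicit, noting that the prescription $p^{[i]}=0$ in the tie case is a convention that selects \emph{one} optimal prediction vector rather than the unique one, while still achieving the maximum accuracy. This clarification prevents the reader from mistaking the stated maximizer for the unique argmax.
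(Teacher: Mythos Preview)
Your proof is correct and follows essentially the same bucket-by-bucket separability argument as the paper: both reduce the global maximization to independent scalar choices and compare the two possible contributions per bucket. Your version is slightly cleaner in stripping away the $\mathbf{p}$-independent constants first and in explicitly flagging the tie case $m_1^{[i]}=m_0^{[i]}$ as a non-uniqueness issue, but the underlying idea is the same.
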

\begin{proof}
The proof is given by considering each bucket separately. Let's consider a bucket $i$ that has $m_1^{[i]}>m_0^{[i]}$. In this case, there are two possibilities:
\begin{equation}
    \begin{aligned}
     p^{[i]}=1  & \rightarrow &  p^{[i]}(m_1^{[i]}-m_0^{[i]})+m_0^{[i]} = 
                    m_1^{[i]}\\
     p^{[i]}=0  & \rightarrow &  p^{[i]}(m_1^{[i]}-m_0^{[i]})+m_0^{[i]} = 
                    m_0^{[i]}\\
    \end{aligned}
\end{equation}
Therefore, the $i^{th}$ contribution to the accuracy in Equation (\ref{eq:accuracy}) is maximised by choosing $p^{[i]}=1$ for those buckets where $m_1^{[i]}\geq m_0^{[i]}$. With a similar reasoning, the contribution to the accuracy for those buckets where $m_1^{[i]}<m_0^{[i]}$ is maximised by choosing $p^{[i]}=0$. This concludes the proof.
\end{proof}

\subsection{Sensitivity and specificity}

The sensitivity or true positive rate ($TPR$) is the ability to correctly predict the positive cases. Considering the entire dataset, the $TPR$ can be expressed using Equations (\ref{eq:tp}) and (\ref{eq:fn}) as 
\begin{equation}
TPR = \frac{TP}{TP+FN} = \frac{\displaystyle\sum_{i=1}^{N_B} m_1^{[i]} p^{[i]}}{\displaystyle\sum_{i=1}^{N_B} \left[m_1^{[i]} p^{[i]} +m_1^{[i]}(1-p^{[i]})
\right]} = \frac{\displaystyle\sum_{i=1}^{N_B} m_1^{[i]} p ^{[i]}}{\displaystyle\sum_{i=1}^{N_B} m_1^{[i]}}
\label{ROCy}
\end{equation}

Analogously, the specificity or true negative rate ($TNR$), which is the ability to correctly reject the negative cases, can be written using Equations (\ref{eq:tn}) and (\ref{eq:fp}) as
\begin{equation}
TNR = \frac{TN}{TN+FP} = \frac{\displaystyle\sum_{i=1}^{N_B} m_0^{[i]} (1-p^{[i]})}{\displaystyle\sum_{i=1}^{N_B} \left[m_0^{[i]} (1-p^{[i]}) +m_0^{[i]}p^{[i]}
\right]} = \frac{\displaystyle\sum_{i=1}^{N_B} m_0^{[i]} (1-p ^{[i]})}{\displaystyle\sum_{i=1}^{N_B} m_0^{[i]}}
\end{equation}

\subsection{ROC Curve}
The receiver operating characteristic (ROC) curve is built by plotting the true positive rate $TPR$ on the $y$-axis, and the false positive rate ($FPR$) on the $x$-axis. For completeness, the $FPR=1-TNR$ is
\begin{equation}
\label{ip}
FPR = 1-TNR = \frac{\displaystyle\sum_{i=1}^{N_B} \left[
m_0^{[i]} - m_0^{[i]}(1-p^{[i]})
\right]
}{\displaystyle\sum_{i=1}^{N_B} m_0^{[i]} } = \frac{\displaystyle\sum_{i=1}^{N_B} m_0^{[i]} p^{[i]}}{\displaystyle\sum_{i=1}^{N_B} m_0^{[i]} }
\end{equation}

\subsection{Perfect Bucket and Perfect Dataset}
Sometimes a bucket may contain only observations that are all in class 0 or 1. Such a bucket is called in this paper {\sl perfect bucket} and is defined as follows.
\begin{definition}
A feature bucket $j$ is called {\sl perfect} if one of the following is satisfied
\begin{equation}
\begin{cases}
    m_0^{[j]} = 0 \\
    m_1^{[j]} > 0
\end{cases}
\end{equation}
or
\begin{equation}
\begin{cases}
    m_0^{[j]} > 0 \\
    m_1^{[j]} = 0
\end{cases}
\end{equation}
\end{definition}
It is also useful to define the set of all perfect buckets $P$.
\begin{definition}
The set of all perfect buckets, indicated with $P$ is defined by
\begin{equation}
    P \equiv \{
    B_j \, \ j = 1,..., N_B \, | (m_0^{[j]} = 0 \, \textrm{and} 
    \, m_1^{[j]} > 0) \, \textrm{or} \, (m_0^{[j]} > 0 \, \textrm{and} 
    \, m_1^{[j]} = 0)
    \}
\end{equation}
\end{definition}
Note that by definition, the set $B/P$ contains only {\sl imperfect} buckets, namely buckets where $m_0^{[j]} > 0$ and  $m_1^{[j]} > 0$.


An important special case is that of a {\sl perfect dataset}, one in which all buckets are perfect. Indicating with $B$ the set containing all buckets, we have $B=P$. It is easy to see that we can create a prediction vector that will predict all cases perfectly, by simply choosing, for feature bucket $j$
\begin{equation}
    p^{[j]} = 
    \begin{cases}
        0 \, \, \textrm{if} \, \, m_0^{[j]} > 0 \\
        1 \, \, \textrm{if} \, \, m_1^{[j]} > 0 .\\
    \end{cases}
\end{equation}
Remember that all feature buckets are perfect, and in a perfect feature bucket $m_0^{[j]}$ and $m_1^{[j]}$ cannot be greater than zero at the same time.
To summarise our definitions we can define:
\begin{definition}
A dataset $B$ (where $B$ is the dataset containing all feature buckets) is called perfect if $B=P$.
\end{definition}
and
\begin{definition}
A dataset $B$ (where $B$ is the dataset containing all feature buckets) is called imperfect if $P \subset B$.
\end{definition}

\section{The Intrinsic Limit Determination Algorithm}
\label{sec:ILDAalg}

Let us introduce the predictor vector ${\bf p}_S  \equiv (1,1,\ldots, 1)$, for which it clearly holds  
\begin{equation}
    \left\{
        \begin{aligned}
        TPR|_{{\bf p}_S} &=& 1 \\
        FPR|_{{\bf p}_S} &=& 1 \\
        \end{aligned}
    \right.
\end{equation}
$TPR|_{{\bf p}_S}$ and $FPR|_{{\bf p}_S}$ indicate $TPR$ and $FPR$ evaluated for ${\bf p}_S$, respectively.

Let us indicate  as a {\sl flip}  the change of a component of a prediction vector from the value of $1$ to the value of $0$. Any possible prediction vector can thus be obtained by a finite series of flips starting from ${\bf p}_S$, where a flip is done only on components with a value equal to $1$.
Let us denote with ${\bf p}_1$  the prediction vector obtained after the first flip, ${\bf p}_2$ after the second, and so on. After $N_B$ flips, the prediction vector will be ${\bf p}_{N_B}  \equiv (0,0,\ldots, 0)$. The $TPR$ and $FPR$ evaluated for a prediction vector ${\bf p}_i$ (with $i=1,\ldots,N_B$) are indicated as $TPR|_{{\bf p}_i}$ and $FPR|_{{\bf p}_i}$. The set of tuples of points' coordinates is indicated with $\mathcal{P}$:
\begin{equation}
    \mathcal{P} = \{(FPR|_{{\bf p}_S}, TPR|_{{\bf p}_S}),
    \ldots, (FPR|_{{\bf p}_{N_B}}, TPR|_{{\bf p}_{N_B}})\}.
\end{equation}
A curve can be constructed by joining the points contained in $\mathcal{P}$ in \emph{ordered segments}, where ordered segments means that the point $(FPR|_{{\bf p}_S}, {TPR}|_{{\bf p}_S})$ will be connected to $({FPR}|_{{\bf p}_1}, {TPR}|_{{\bf p}_1})$;  $({FPR}|_{{\bf p}_1}, {TPR}|_{{\bf p}_1})$ with $({FPR}|_{{\bf p}_2}, \allowbreak {TPR}|_{{\bf p}_2})$;  and so on. The segment that joins the point 
$({FPR}|_{{\bf p}_S}, {TPR}|_{{\bf p}_S})$ with $({FPR}|_{{\bf p}_1}, \allowbreak{TPR}|_{{\bf p}_1})$ is denoted as $s^{[0]}$; the one that joins the points $({FPR}|_{{\bf p}_1}, \allowbreak{TPR}|_{{\bf p}_1})$ and $({FPR}|_{{\bf p}_2}, {TPR}|_{{\bf p}_2})$ is denoted as $s^{[1]}$, and so on. In Figure \ref{fig:segments} a curve obtained by joining the tuples given by the respective prediction vectors obtained with three flips is visualised to give an intuitive understanding of the process.

\begin{figure}
\begin{center}
\includegraphics[width = 12cm]{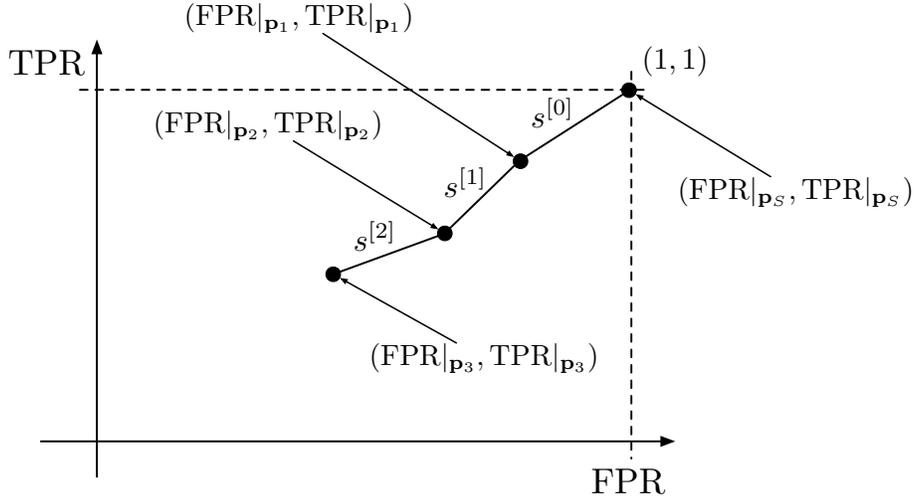}
\caption{Example of a curve  constructed by joining 3 segments obtained after 3 flips.}
\label{fig:segments}
\end{center}
\end{figure}

The ILD algorithm provides a constructive process to select the order of the components to be flipped to obtain the curve $\mathcal{\tilde C}$ characterized by the theoretical maximum AUC that can be obtained from the considered dataset, regardless of the predictive  model used. 


To be able to describe the ILD algorithm effectively and prove its validity, some additional concepts are needed and described in the following paragraphs.

\subsection{Effect of one single flip}
Let us consider what happens if one single component, say the $j^{th}$ component, of ${\bf p}_S$ is changed from $1$ to $0$. The $TPR$ and $FPR$ values clearly change. By denoting with ${\bf p}_1$ the prediction vector in which the $j^{th}$ component was changed, the following equations hold:
\begin{equation}
    \left\{
        \begin{aligned}
        {TPR}|_{{\bf p}_1} &=& 1- \frac{m_1^{[j]}}{\displaystyle\sum_{i=1}^{N_B} m_1^{[i]}} \\
        {FPR}|_{{\bf p}_1} &=& 1-\frac{ m_0^{[j]}}{\displaystyle\sum_{i=1}^{N_B} m_0^{[i]}}. \\
        \end{aligned}
    \right.
\end{equation}
Therefore, $TPR$ and $FPR$ will be reduced by an amount equal to the ratio $m_1^{[j]}/\sum_{i=1}^{N_B} m_1^{[i]}$ and $m_0^{[j]}/\sum_{i=1}^{N_B} m_0^{[i]}$, respectively. 
As an example, the effect of multiple single flips on $TPR$ and $FPR$ is illustrated in Figure \ref{fig:random_walk}. Here are shown the ROC curves resulting from a random flipping starting from ${\bf p}_S$ for a real-life dataset, namely, the Framingham dataset \citep{mahmood2014framingham} (See Section \ref{sec:appl}). As expected, flipping components randomly results in a curve that lies close to the diagonal. Since the diagonal corresponds to randomly assigning classes to the observations, randomly flipping does not bring to the best prediction possible with the given dataset.
\begin{figure}[hbt]
    \centering
    \includegraphics[scale=0.5]{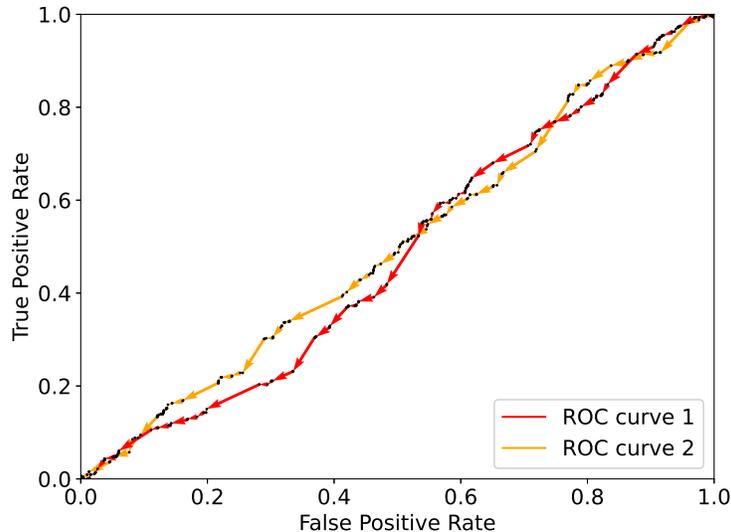}
    \caption{Two examples of ROC curves obtained from random flipping applied to the Framingham dataset \citep{mahmood2014framingham}.}
    \label{fig:random_walk}
\end{figure}


By ordering the points in $\mathcal{P}$ in ascending order based on the ratio  ${{TPR}|_{{\bf p}_j}}/\allowbreak{{FPR}|_{{\bf p}_j}}$, a new set of points $\mathcal{\tilde P}$ is  constructed.
It can happen that in a given dataset, multiple points have ${FPR}|_{{\bf p}_j}=0$. In this case, this ratio can not be calculated. If this happens all the points with ${FPR}|_{{\bf p}_j}=0$ can be placed at the end of the list of points. The order between those points is irrelevant. It is interesting to note that a flip for perfect buckets for which $m_0^{[j]}=0$ will have ${{TPR}|_{{\bf p}_j}}=0$ and for all perfect buckets for which $m_1^{[j]}=0$ will have ${FPR}|_{{\bf p}_j}=0$.

With the set of ordered points $\mathcal{\tilde P}$, a curve $\mathcal{\tilde C}$ can be constructed by joining the points in $\mathcal{\tilde P}$ as described in the previous paragraph. Note that the relative order of all points with ${{TPR}|_{{\bf p}_j}}=0$ is also irrelevant, in the sense that this order does not affect the AUC of $\mathcal{\tilde C}$. 

\subsection{ILD Theorem}
The ILD theorem can now be formulated.

\begin{theorem}[ILD Theorem]
Among all possible curves that can be constructed by generating a set of points $\mathcal{P}$ by flipping all components of ${\bf p}_S$ in any order one at a time,  the curve $\mathcal{\tilde C}$ has the maximum AUC.
\end{theorem}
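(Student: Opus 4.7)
The plan is to prove optimality of $\mathcal{\tilde C}$ by a pairwise exchange argument on the flip order. First I would observe that, because $TPR$ and $FPR$ are linear in $\mathbf{p}$ by Equations (\ref{ROCy}) and (\ref{ip}), flipping bucket $j$ from $1$ to $0$ always contributes a fixed horizontal drop $\Delta x_j = m_0^{[j]}/M_0$ and a fixed vertical drop $\Delta y_j = m_1^{[j]}/M_1$ to the constructed curve, with $M_0 = \sum_i m_0^{[i]}$ and $M_1 = \sum_i m_1^{[i]}$. Any admissible flip sequence therefore assembles the same multiset of segments into a piecewise linear path from $(1,1)$ to $(0,0)$; only the \emph{order} of the segments differs between curves.

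Next, I would express the area under the resulting curve as an explicit function of the flip order $\sigma = (j_1,\ldots,j_{N_B})$. Summing the trapezoidal contribution of each segment and rearranging yields
\begin{equation*}
\mathrm{AUC}(\sigma) \;=\; 1 \;-\; \tfrac12 \sum_{j=1}^{N_B} \Delta x_j \Delta y_j \;-\; \sum_{i \prec_\sigma k} \Delta x_k \Delta y_i,
\end{equation*}
where $i \prec_\sigma k$ means bucket $i$ is flipped before bucket $k$. Only the last term depends on $\sigma$, so maximising the AUC reduces to \emph{minimising} a sum of pairwise products over ordered pairs of buckets.

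The heart of the argument is then a standard adjacent-transposition step: swapping two neighbours $j_k,j_{k+1}$ in $\sigma$ alters the pairwise sum only by replacing the single term $\Delta x_{j_{k+1}}\Delta y_{j_k}$ with $\Delta x_{j_k}\Delta y_{j_{k+1}}$, so the swap strictly decreases the sum if and only if $\Delta y_{j_k}/\Delta x_{j_k} > \Delta y_{j_{k+1}}/\Delta x_{j_{k+1}}$. Iterating this (bubble-sort) shows that the minimum is attained precisely when the buckets are flipped in monotone order of their segment slope $\Delta y_j/\Delta x_j$. To conclude that this matches $\mathcal{\tilde C}$, I would then check by direct computation that the ILD sorting key $TPR|_{\mathbf{p}_j}/FPR|_{\mathbf{p}_j} = (1 - m_1^{[j]}/M_1)/(1 - m_0^{[j]}/M_0)$ is a strictly monotone function of the slope $\Delta y_j/\Delta x_j = m_1^{[j]}M_0/(m_0^{[j]}M_1)$, so that sorting by this key produces the same ordering (up to ties) as sorting by slope.

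The step I expect to require the most care is the treatment of the degenerate buckets, where $m_0^{[j]}=0$ or $m_1^{[j]}=0$ makes the slope or the sorting ratio ill defined. In those cases the corresponding $\Delta x_j$ or $\Delta y_j$ vanishes, and the pairwise objective is invariant under reordering among such buckets; this is what justifies the tie-breaking convention in the algorithm (placing all $FPR|_{\mathbf{p}_j}=0$ entries at the end, with internal order arbitrary) and also connects the argument back to the notions of perfect bucket and perfect dataset introduced earlier. Once these edge cases are dispatched, the exchange argument identifies $\mathcal{\tilde C}$ as the unique (up to ties) AUC-maximising curve among all flip sequences, which is exactly the claim of the ILD Theorem.
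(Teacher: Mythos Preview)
Your exchange argument is essentially the paper's proof. The paper phrases it geometrically: swap two consecutive segments whenever the interior angle between them satisfies $\beta^{[j,j+1]}<\pi$, gaining the area of the parallelogram in Figure~\ref{fig:ilda_angles}(C), and iterate until no such pair remains; it then observes that this is bubble sort on the segment angles $\alpha^{[j]}=\arctan(\Delta y_j/\Delta x_j)$. Your explicit trapezoid formula for $\mathrm{AUC}(\sigma)$ and the reduction to minimising $\sum_{i\prec_\sigma k}\Delta x_k\,\Delta y_i$ is a cleaner algebraic packaging of the same adjacent-transposition idea, and your treatment of the degenerate buckets is more explicit than the paper's.

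The one step that will not go through as you have written it is the final identification. The map $(a,b)\mapsto(1-a)/(1-b)$ is \emph{not} a monotone function of the ratio $a/b$: for instance $(\Delta x_j,\Delta y_j)=(0.2,0.1)$ and $(0.1,0.06)$ give slopes $0.5<0.6$ but key values $9/8>0.94/0.9$, so sorting by $(1-m_1^{[j]}/M_1)/(1-m_0^{[j]}/M_0)$ does \emph{not} in general reproduce the slope order that your exchange argument singles out. The paper's notation here is admittedly loose, but from context---the remark that several buckets can have $FPR|_{\mathbf p_j}=0$, which is only possible if this quantity denotes $m_0^{[j]}/M_0$ rather than $1-m_0^{[j]}/M_0$, together with the fact that Algorithm~\ref{algo:ilda_proof} sorts by segment angle---the intended sorting key is the slope $\Delta y_j/\Delta x_j$ itself. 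Under that reading your identification step is immediate and the proof closes; but the monotonicity claim as you stated it is false and should be replaced by this direct reading.
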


\begin{proof}
The Theorem is proven by giving a construction algorithm. It starts with one set of points $\mathcal P_0$ generated by  flipping components of ${\bf p}_S$ in a random order. Let us consider two adjacent segments generated with  $\mathcal P_0$: $s^{[j]}$ and $s^{[j+1]}$. In Figure \ref{fig:ilda_angles} panel (A) the two segments are plotted in the case where the angle between them $\beta^{[j,j+1]}<\pi$. The angles $\alpha^{[j]}$ and $\alpha^{[j+1]}$ indicate the angles of the segments with the horizontal direction and $\beta^{[j,j+1]}$ the angle between the two segments $j$ and $j+1$. The area under the two segments and any horizontal line that lies below the segments can be increased by simply switching the order of the two flips, as it is depicted in Figure \ref{fig:ilda_angles} panel (B). Switching the order simply means flipping first the $j+1$ component and then the $j$ component.

\begin{figure}[hbt]
\begin{center}
\includegraphics[width =\textwidth]{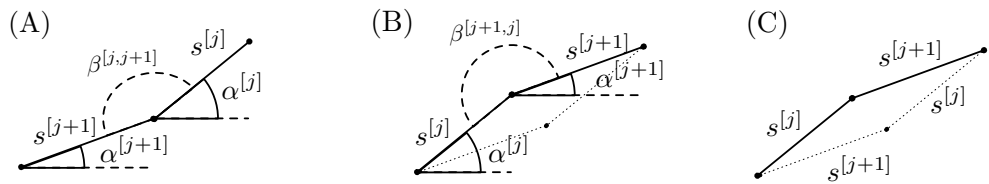}
\caption{Visual explanation for the ILD Theorem. Panel (A): two consecutive segments after flipping components $j$ and then $j+1$; Panel (B): two consecutive segments  after flipping components $j+1$ and then $j$;  Panel (C):  parallelogram representing the difference between the area under the segments in panel (A) and the area under the segments in panel (B).}
\label{fig:ilda_angles}
\end{center}
\end{figure}
It is important to note that in Figure \ref{fig:ilda_angles} panel (A) $\beta^{[j,j+1]}<\pi$ while in panel (B) $\beta^{[j+1,j]}>\pi$. It is evident that the area under the two segments in panel (B) is greater than the one in panel (A). The parallelogram in Figure \ref{fig:ilda_angles} panel (C) depicts the area difference. 

The proof is based on repeating the previous step until all angles $\beta^{[j,j+1]}>\pi$. This is described in pseudo-code in  Algorithm \ref{algo:ilda_proof}.
\begin{algorithm}[h]
\SetAlgoLined
 \BlankLine
 Generate a set of points $\mathcal P_0$ by  flipping the components of ${\bf p}_S$ in a random sequence\;
 \While{true}{
     $c=0$\;
     \For{$i=1,...,N_B$}{
       \If{$\beta^{[i,i+1]}<\pi$}
       {Switch points $i$ and $(i+1)$ in $\mathcal P_0$\;
       c = c + 1\;}
    }
    \If{$c = 0$} {Exit {\sl While} loop and end the Algorithm}
    The final set of points $\mathcal P_0$ obtained in the loop above will generate the curve $\mathcal {\tilde C}$.
}
 \caption{Algorithm to construct the curve $\mathcal {\tilde C}$.}
 \label{algo:ilda_proof}
\end{algorithm}
The area under the curve obtained with Algorithm \ref{algo:ilda_proof} cannot be made larger with any further switch of points in $\mathcal P$.
\end{proof}

Note that Algorithm \ref{algo:ilda_proof} will end after a finite number of steps. This can be shown by noting that the described algorithm is nothing else than the bubble sort algorithm \cite{nocedal2006numerical} applied to the set of angles $\alpha^{[1]}$, $\alpha^{[2]}$, ..., $\alpha^{[N_B]}$. So this algorithm has a worst-case and average complexity of ${\mathcal O}(N_B^2)$. 

\subsection{Handling missing values}

Missing values can be handled by imputing them with a value that does not appear in any feature. All observations that have missing values in a specific feature will be assigned to the same bucket and considered similar, since we have no way of knowing better.

\section{Application of the ILD algorithm to the Framingham Heart Study Dataset}
\label{sec:appl}

The power of the ILD algorithm is best demonstrated by applying it to a real dataset, here the medical dataset named Framingham \citep{mahmood2014framingham}, which is publicly available on the Kaggle website \citep{kaggle}. This dataset  comes from an ongoing cardiovascular risk study made on residents of the town of Framingham (Massachusetts, US). Different cardiovascular risk score versions were developed during the years \citep{wilson1998prediction}, the most current of whom is the 2008 study by \citet{d2008general}, to which the ILD algorithm results are also referred to for comparison of performances. 

The classification goal is to predict, given a series of risk factors, the 10-years risk of a patient of future coronary heart disease. This is a high impact task, since 17.9 million deaths occur worldwide every year due to heart diseases \cite{cardiodiseases} and their early prognosis may be of crucial importance for a correct and successful treatment. The  dataset used in our study contains 4238 patients and 7 features: gender (0: female, 1: male); smoker (0: no, 1: yes); diabetes (0: no, 1: yes); hypertension treatment (0: no, 1: yes); age; total cholesterol; and systolic blood pressure (SBP). The last three features are continuous variables and are discretized as followed: 
\begin{itemize}
\item age: $0\text{ if age}<40\text{ years, }1\text{ if age}\geq40\text{ years and age}<60\text{ years, }\\2\text{ if age}\geq60\text{ years}$;
\item total cholesterol: $0\text{ if total cholesterol}<200~\mathrm{mg/dL}\text{, }\\1\text{ if total cholesterol}\geq200~\mathrm{mg/dL}\text{ and total cholesterol}<240~\mathrm{mg/dL}\text{, }\\2\text{ if total cholesterol}\geq240~\mathrm{mg/dL}$;
\item SBP: $0\text{ if SBP}<120~\mathrm{mmHg}\text{, }1\text{ if SBP}\geq120~\mathrm{mmHg}$. 
\end{itemize}
The outcome variable is binary (0: no coronary disease, 1: coronary disease). 

To correctly interpret the comparison with existing results, it is important to remark that in the dataset used in our study, the high-density lipoprotein (HDL) cholesterol variable is missing with respect to the original Framingham dataset employed in \citep{d2008general}.
Finally, to create the buckets all missing values are substituted by a feature value of -1.

The application of the algorithm starts with the population of the buckets as described in the previous sections. A total number of 177 buckets are generated. The dataset is split into two parts: a training set $S_T$ (80\% of the data) and a validation one $S_V$ (20\% of the data). For comparison, the Na\"ive Bayes classifier is also trained and validated on $S_T$ and $S_V$.
The performance of the ILD algorithm and the Na\"ive Bayes classifier are shown through the ROC curve in Figure \ref{fig:roc_fram}. The AUC for the ILD algorithm is 0.78, clearly higher than that for the Na\"ive Bayes classifier, namely, 0.68.

\begin{figure}
\begin{center}
\includegraphics[scale=0.5]{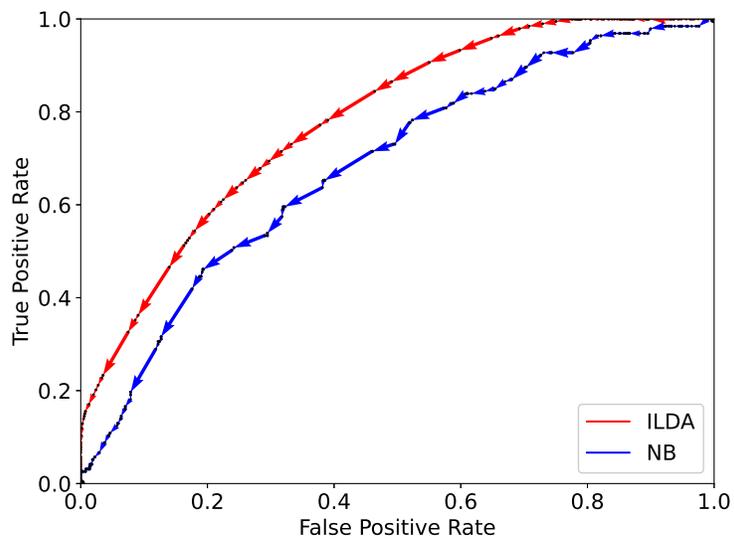}
\caption{Comparison of the performance of the ILD algorithm (ILDA, red) and Na\"ive Bayes classifier (NB, blue) implemented on categorical features based on one single training and validation split.}
\label{fig:roc_fram}
\end{center}
\end{figure}
 
To further test the performance, the split and training is repeated for 100 different dataset splits. Each time both the ILD algorithm and the Na\"ive Bayes classifier are applied to the validation set $S_V$ and the resulting AUCs are plotted in Figure \ref{fig:ilda_fram} (top panel). For clarity, the difference between the AUC provided by the two algorithms is shown in Figure \ref{fig:ilda_fram} (bottom panel).
\begin{figure}
\begin{center}
\includegraphics[scale=0.5]{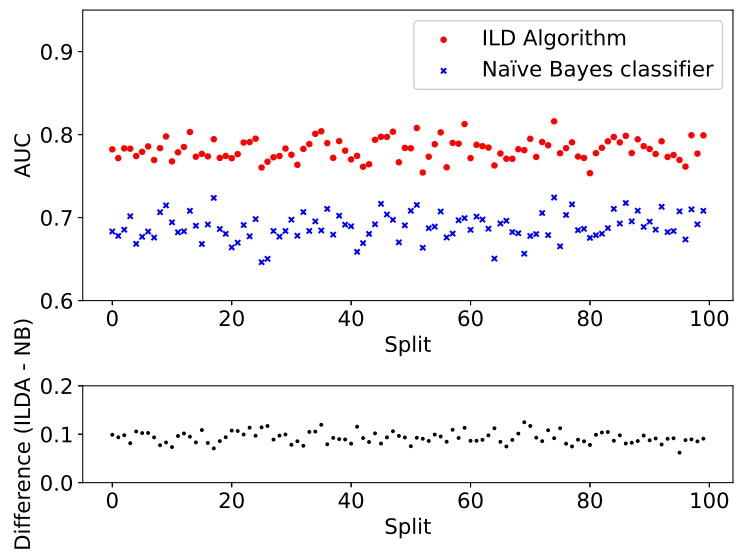}
\caption{Comparison between the performance of the ILD algorithm (red) and Na\"ive Bayes classifier (blue) implemented on categorical features based on 100 different training and validation splits. Top panel: AUC; bottom panel: difference between the AUC provided by the ILD algorithm and the Na\"ive Bayes classifier.}
\label{fig:ilda_fram}
\end{center}
\end{figure}

This example shows that the application of the ILD algorithm allows the comparison of the prediction performance of a model, here the Na\"ive Bayes classifier, with the maximum obtainable for a given dataset. The maximum accuracy over the validation set $S_V$ is 85\% for the Na\"ive Bayes classifier (calculated for a specific threshold, i.e., 61\%, which optimizes the accuracy over the training set $S_T$) and 86\% for the ILD algorithm (calculated applying Theorem \ref{th:max_accuracy}). The reported accuracies are the ones that refer to the ROC curves shown in Figure \ref{fig:roc_fram}. The two values are similar and have been reported for completeness, even if this result may by misleading. The reason lies in the strong dataset unbalance, since only 15\% of the patients experienced a cardiovascular event. In particular, both the Na\"ive Bayes classifier and ILD algorithm obtains a true positive rate and a false positive rate near zero in the point of the ROC curve which maximises the accuracy over the validation set, with a high miss-classification in positive patients, which however cannot be noticed from the accuracy result. As it is known, the accuracy is not a good metric for unbalanced datasets, and the AUC is a much widely used metric that does not suffer from the problem described above.

\section{Conclusions}
\label{sec:concl}

The work presents a new algorithm, the ILD algorithm, which determines, the best possible ROC curve that can be obtained from a dataset with categorical features and binary outcome, regardless of the predictive  model.

The ILD algorithm is of fundamental importance to practitioners because it allows:
\begin{itemize}
    \item to determine the prediction power (namely, the BE) of a specific set of categorical features;
    \item to decide when to stop searching for better models;
    \item to decide if it is necessary to enrich the dataset.
\end{itemize}
The ILD algorithm has thus the potential to   revolutionize how binary prediction problems will be solved in the future, allowing  practitioners to save an enormous amount of efforts,  time, and money (considering that, for example, computing time is expensive especially in cloud environments).

The major limitations of the ILD algorithm are firstly the requirement for the features to be categorical. The generalization of this approach to continuous features is the natural next step and will open new ways of understanding datasets with continuous features. Secondly, the ILD algorithm works well when the different buckets are populated with enough observations. The ILD algorithm would not give any useful information on a dataset with just one observation in each bucket (since it would be a perfect dataset). Consider the example of gray levels images. Even if pixel values could be considered categorical (the gray level of a pixel is an integer that can assume values from 0 to 255), two major problems would arise if the ILD algorithm would be applied to such a case: the number of buckets would be extremely large and each bucket would contain only one image therefore making the ILD algorithm completely useless, as only perfect buckets will be constructed. 

An important further research direction is the expansion of the ILD algorithm to detect the best performing models that do {\bf not} overfit the data. In the example of images, it is clear that being a perfect dataset one could theoretically construct a perfect predictor, therefore giving a maximum accuracy of 1. 
The interesting question is how to determine the maximum accuracy or the best AUC only in cases in which no overfitting is occurring. This is a nontrivial problem that is currently under investigation by the authors. To address, at least partially, this problem, the authors have defined a perfection index (IP) that can help in this regard. IP is discussed in Appendix \ref{app:A}.

To conclude, although more research is needed to generalize the ILD algorithm, but it is, to the best knowledge of the authors, the first algorithm that is able to determine the exact BE from a generic dataset with categorical features, regardless of the predictive models.

 
\appendix 
\section{Perfection Index $I_P$}
\label{app:A}

It is very useful to give a measure of {\sl how} perfect a dataset is with a single number. To achieve this, a perfection index (PI) $I_P$ can be defined.
\begin{definition}
The Perfection Index $I_P$ is defined as:
\begin{equation}
\label{perfindex}
    I_P \equiv \frac{1}{M} \displaystyle \sum_{i=1}^{N_B} |m_1^{[i]}-m_0^{[i]}|.
\end{equation}
\end{definition}

Note that if a bucket $i$ is perfect then either $m_1^{[i]}$ or $m_0^{[i]}$ are zero. 
In a perfect dataset $B/P = \varnothing$. In this case, from Eq. (\ref{perfindex}) it is easy to see that $I_P=1$ since
\begin{equation}
    I_P = \frac{\displaystyle \sum_{i \in B_1} m_1^{[i]} + \sum_{i \in B_0} m_0^{[i]} }{M} = 1
\label{eq:ip}
\end{equation}
where
\begin{equation}
    B_1 \equiv \{
    j \in [1,N_B]  | m_0^{[j]} = 0
    \}
\end{equation}
and
\begin{equation}
    B_0 \equiv \{
    j \in [1,N_B]  | m_1^{[j]} = 0
    \}
\end{equation}
For an imperfect dataset, it is to see that $I_P$ will be less than 1. In fact
\begin{equation}
    I_P = \frac{\displaystyle \sum_{i \in B_1} m_1^{[i]} + \sum_{i \in B_0} m_0^{[i]} + \sum_{i \in B_{01}} |m_1^{[i]}-m_0^{[i]}|}{M} 
\end{equation}
where
\begin{equation}
    B_{01} \equiv \{
    j \in [1,N_B]  | m_1^{[j]} >0 \textrm{ and } m_0^{[j]} >0
    \}
\end{equation}
that cannot be one as long as $B_{01}$ is not empty. There is a special interesting case when the dataset $B$ is completely imperfect, meaning $B_0 = B_1 = \varnothing$, and for every bucket $i$ with $i=1,...,N_B$ is true that $ m_1^{[i]} =  m_0^{[i]}$. In this case, regardless of the predictions a model may make, the accuracy will always be 0.5. In this case, $I_P=0$. In facts, one can see that in this case
\begin{equation}
    a = \frac{1}{M} \sum_{i\in \tilde B} [p^{[i]}( m_1^{[i]}- m_0^{[i]} ) +  m_0^{[i]}] = \frac{1}{2}
\end{equation}
since $m_1^{[i]} = m_0^{[i]}$ and that 
\begin{equation}
    \sum_{i\in \tilde B}  m_0^{[i]} = \sum_{i\in \tilde B}  m_1^{[i]} = \frac{M}{2}
\end{equation}

This index is particularly useful since the following theorem can be proved.
\begin{theorem} The perfection index satisfy the relationship
\begin{equation}
    I_P = \max_{S_P} a - \min_{S_P} a
\end{equation}
where $S_P$ is the set of all possible prediction vectors for the bucket set $B$.
The perfection index measures that the ranges of possible values that the accuracy ($a$) can have. 
\end{theorem}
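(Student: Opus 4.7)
The plan is to exploit the fact that the accuracy, as written in Equation (\ref{eq:accuracy}), is a sum of independent bucket-wise contributions, each of which is linear in a single binary variable $p^{[i]} \in \{0,1\}$. Because the components of the prediction vector can be chosen independently across buckets, maximizing (respectively minimizing) $a$ over $S_P$ is equivalent to maximizing (respectively minimizing) every per-bucket term separately. This decoupling reduces the global optimization to $N_B$ trivial binary decisions.

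Concretely, I would first write
\begin{equation*}
a({\bf p}) = \frac{1}{M} \sum_{i=1}^{N_B} \bigl[ p^{[i]}(m_1^{[i]} - m_0^{[i]}) + m_0^{[i]} \bigr],
\end{equation*}
and observe that the $i$-th term equals $m_1^{[i]}$ when $p^{[i]} = 1$ and $m_0^{[i]}$ when $p^{[i]} = 0$. Hence, regardless of the sign of $m_1^{[i]} - m_0^{[i]}$, the maximum value of the $i$-th term over $p^{[i]} \in \{0,1\}$ is $\max(m_0^{[i]}, m_1^{[i]})$ and the minimum value is $\min(m_0^{[i]}, m_1^{[i]})$. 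Summing the best (resp.\ worst) choices over all buckets yields
\begin{equation*}
\max_{S_P} a = \frac{1}{M} \sum_{i=1}^{N_B} \max(m_0^{[i]}, m_1^{[i]}), \qquad \min_{S_P} a = \frac{1}{M} \sum_{i=1}^{N_B} \min(m_0^{[i]}, m_1^{[i]}).
\end{equation*}

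Subtracting and using the identity $\max(x,y) - \min(x,y) = |x-y|$ term by term gives
\begin{equation*}
\max_{S_P} a - \min_{S_P} a = \frac{1}{M} \sum_{i=1}^{N_B} \bigl|m_1^{[i]} - m_0^{[i]}\bigr| = I_P,
\end{equation*}
which is exactly Definition (\ref{perfindex}). Note that the maximizer coincides with the rule from Theorem \ref{th:max_accuracy} (choose $p^{[i]} = 1$ iff $m_1^{[i]} > m_0^{[i]}$), so the first identity is essentially a restatement of that theorem; the minimizer is obtained by the opposite rule.

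There is no real obstacle here: the argument is a direct consequence of the linearity of $a$ in each $p^{[i]}$ and of the independence of the bucket choices. The only point that requires a small amount of care is making explicit that optimizing a sum of independent terms over a product set of binary choices is the same as optimizing each term separately; once this is stated, everything else is algebraic.
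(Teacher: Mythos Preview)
Your proof is correct and follows essentially the same approach as the paper: both exploit the bucket-wise decoupling of the accuracy to optimize each term independently, obtain explicit formulas for $\max_{S_P} a$ and $\min_{S_P} a$, and subtract. Your use of $\max(m_0^{[i]},m_1^{[i]})$, $\min(m_0^{[i]},m_1^{[i]})$ and the identity $\max(x,y)-\min(x,y)=|x-y|$ is a slightly cleaner packaging of the same case split the paper performs explicitly.
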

\begin{proof}
To start the proof the formula for the maximum ($\max_{S_P} a$) and minimum ($\min_{S_P} a$) possible accuracy must be derived. Starting from Equation (\ref{eq:accuracy}) and choosing (to get the maximum accuracy) $p^{[i]}=1$ for $m_1^{[i]} \geq m_0^{[i]}$ and $p^{[i]}=0$ for $m_1^{[i]} \leq m_0^{[i]}$ the result is
\begin{equation}
    \max_{S_P} a = \frac{1}{M} \left[
        \sum_{m_1\geq m_0}  m_1^{[i]} +
        \sum_{m_1\leq m_0}  m_0^{[i]}
    \right]
\label{eq:amax}
\end{equation}
at the same time, by choosing $p^{[i]}=0$ for $m_1^{[i]} \geq m_0^{[i]}$ and $p^{[i]}=1$ for $m_1^{[i]} \leq m_0^{[i]}$ $\min_{S_P} a$ can be written as
\begin{equation}
    \min_{S_P} a = \frac{1}{M} \left[
        \sum_{m_1\geq m_0}  m_0^{[i]} +
        \sum_{m_1\leq m_0}  m_1^{[i]}
    \right]
\label{eq:amin}
\end{equation}
To prove the theorem, let us rewrite the maximum accuracy from Equation (\ref{eq:amax}) as
\begin{equation}
    \max_{S_P} a = \frac{1}{M} \sum_{m_1\geq m_0} [( m_1^{[i]}- m_0^{[i]} ) +  \sum_{m_1\leq m_0} [( m_1^{[i]}- m_0^{[i]} ) +
    \sum_{m_1\geq m_0}  m_0^{[i]} +
    \sum_{m_1\leq m_0}  m_1^{[i]}
    ] 
\end{equation}
and using Equations (\ref{eq:ip}) and (\ref{eq:amin}) previous equation can be re-written as
\begin{equation}
    \max_{S_P} a = I_P + \min_{S_P} a.
\label{eq:aip}
\end{equation}
This concludes the proof.
\end{proof}

\subsection{Interpretation of the perfection index $I_P$}

In the two extreme cases, if $I_P$ is small then the ILD algorithm will give very useful information. The dataset is "imperfect" enough that an analysis as described is very useful. The more the value of $I_P$ gets close to one, the more the analysis, as it is formulated here, is less helpful. Since in the case of a perfect dataset a perfect prediction model can be easily built and therefore the question of what is the best possible model loses significance. Note that there is a big assumption made, namely, that a feature bucket with a few observations contains the same information as one with one thousand observations in it. In real life, one feature bucket with just one (or few) observation will probably be due to the lack of observations collected with the given set of features and therefore should be doubted in its importance.




\clearpage
\appendix







\bibliographystyle{elsarticle-num-names}
\bibliography{bibtex.bib}







\end{document}